\newif\ifArxiv
    \newtheorem{theorem}{Theorem}[section]
    \newtheorem{lemma}[theorem]{Lemma}
    \theoremstyle{definition}
    \newcommand{\acks}[1]{\section*{Acknowledgments}#1}
    \title{Boosting, Voting Classifiers and\texorpdfstring{\\}{} Randomized Sample Compression Schemes}
    \author{%
        Arthur {da Cunha}
        \qquad
        Kasper Green Larsen\\
        Aarhus University\\
        \texttt{\{dac, larsen\}@cs.au.dk}
        \and
        Martin Ritzert\\
        Georg-August Universit\"at G\"ottingen\\
        \texttt{ritzert@informatik.uni-goettingen.de}
    }
    \date{}
    \title[Randomized Sample Compression Schemes]{Boosting, Voting Classifiers and\texorpdfstring{\\}{} Randomized Sample Compression Schemes}
\let\oldnl\nl%
\newcommand{\nonl}{\renewcommand{\nl}{\let\nl\oldnl}}%
\providecommand\given{}
\newcommand\GivenSymbol[1][]{%
    \mathchoice{\:}{\:}{\,}{\,}#1\vert%
    \allowbreak%
    \mathchoice{\:}{\:}{\,}{\,}%
    \mathopen{}%
}
\DeclarePairedDelimiterX\Set[1]\{\}{%
    \renewcommand\given{\GivenSymbol[\delimsize]}%
    #1%
}
\DeclarePairedDelimiterXPP\ParensWithGiven[1]{}{(}{)}{}{%
    \renewcommand\given{\GivenSymbol[\delimsize]}%
    #1%
}
\DeclarePairedDelimiterXPP\BracksWithGiven[1]{}{[}{]}{}{%
    \renewcommand\given{\GivenSymbol[\delimsize]}%
    #1%
}
\NewDocumentCommand\Prob{ e{_^} }{
    \operatorname*{Pr}
    \IfNoValueF{#1}{\sb{#1}}
    \IfNoValueF{#2}{\errmessage{Not supposed to have superscript}\sp{#2}}
    \BracksWithGiven
}
\NewDocumentCommand\Ev{ e{_^} }{
    \operatorname{\mathbb{E}}
    \IfNoValueF{#1}{\sb{#1}}
    \IfNoValueF{#2}{\errmessage{Not supposed to have superscript}\sp{#2}}
    \BracksWithGiven
}
\NewDocumentCommand\Var{ e{_^} }{
    \operatorname{Var}
    \IfNoValueF{#1}{\sb{#1}}
    \IfNoValueF{#2}{\errmessage{Not supposed to have superscript}\sp{#2}}
    \ParensWithGiven
}
\NewDocumentCommand\Cov{ e{_^} }{
    \operatorname{Cov}
    \IfNoValueF{#1}{\sb{#1}}
    \IfNoValueF{#2}{\errmessage{Not supposed to have superscript}\sp{#2}}
    \ParensWithGiven
}
\newcommand{\R}{\mathbb{R}}
\newcommand{\N}{\mathbb{N}}
\DeclareMathOperator{\Normal}{\mathcal{N}}
\DeclarePairedDelimiter\abs\lvert\rvert
\DeclarePairedDelimiterXPP\normone[1]{}\lVert\rVert{_1}{#1}
\DeclarePairedDelimiterXPP\normtwo[1]{}\lVert\rVert{_2}{#1}
\DeclarePairedDelimiterXPP\norminf[1]{}\lVert\rVert{_\infty}{#1}
\DeclarePairedDelimiterXPP\normmax[1]{}\lVert\rVert{_\mathrm{max}}{#1}
\DeclarePairedDelimiterXPP\normspec[1]{}\lVert\rVert{_\mathrm{spectral}}{#1}
\let\normmax\norminf
\newcommand*{\methodname}[1]{\normalfont{\textsc{#1}}}
\newcommand{\cX}{\mathcal{X}}
\newcommand{\cY}{\mathcal{Y}}
\newcommand{\bS}{\mathbf{S}}
\newcommand{\bx}{\mathbf{x}}
\newcommand{\cA}{\mathcal{A}}
\newcommand{\brf}{\mathbf{f}}
\newcommand{\br}{\mathbf{r}}
\newcommand{\bg}{\mathbf{g}}
\newcommand{\eps}{\varepsilon}
\newcommand{\by}{\mathbf{y}}
\newcommand{\bX}{\mathbf{X}}
\newcommand{\cH}{\mathcal{H}}
\newcommand{\cW}{\mathcal{W}}
\newcommand{\bZ}{\mathbf{Z}}
\newcommand{\bD}{\mathbf{D}}
\newcommand{\cD}{\mathcal{D}}
\newcommand{\randKappa}{\bm{\kappa}}
\newcommand{\bh}{\mathbf{h}}
\newcommand{\Risk}{R}
\DeclareMathOperator{\sign}{sign}
\DeclareMathOperator{\margin}{margin}
\let\cite\citep
\begin{document}
    \maketitle

    \begin{abstract}%
        In \emph{boosting}, we aim to leverage multiple \emph{weak learners} to produce a \emph{strong learner}.
At the center of this paradigm lies the concept of building the strong learner as a \emph{voting classifier}, which outputs a weighted majority vote of the weak learners.
While many successful boosting algorithms, such as the iconic AdaBoost, produce voting classifiers, their theoretical performance has long remained sub-optimal: The best known bounds on the number of training examples necessary for a voting classifier to obtain a given accuracy has so far always contained at least two logarithmic factors above what is known to be achievable by general \emph{weak-to-strong} learners.
In this work, we break this barrier by proposing a randomized boosting algorithm that outputs voting classifiers whose generalization error contains a single logarithmic dependency on the sample size.
We obtain this result by building a general framework that extends sample compression methods to support randomized learning algorithms based on sub-sampling.

    \end{abstract}

    \ifArxiv\else
        \begin{keywords}%
            Boosting, Voting Classifiers, Generalization Bounds, Sample Compression Schemes%
        \end{keywords}
    \fi
    \section{Introduction}
Boosting is a powerful machine learning primitive that allows improving the performance of a base learning algorithm $\cA$ by training a committee/ensemble of classifiers.
The classic AdaBoost~\citep{adaboost} algorithm for binary classification is perhaps the most well-known boosting algorithm.
Given an input domain $\cX$ and a set $S = \{(x_1, y_1), \ldots, (x_n,y_n)\}$ of $n$ labeled samples from $\cX \times \{-1, 1\}$, the main idea of AdaBoost is to iteratively invoke $\cA$ on reweighed versions of $S$.
Each invocation returns a hypothesis $h_t\colon \cX \to \{-1,1\}$ to be combined into a final \emph{voting classifier} $f$ as $f(x) = \sign(\sum_{t=1}^T \alpha_t h_t(x))$ for constants $\alpha_t>0$.
The weights used at iteration $t$ are such that samples $(x_i,y_i)$ that are misclassified by many previous hypotheses $h_j$ with $j < t$ receive a large weight, and correctly classified samples receive smaller weights.
This intuitively guides the attention of $\cA$ towards samples with which that previous hypotheses struggle.
More modern variants of boosting include the highly practical XGBoost~\cite{xgboost} and LightGBM~\cite{lightGBM} implementations of Gradient Boosting~\cite{gradboost}.
See the survey by~\citet{survey} for more on boosting and its applications.

\paragraph{Weak-to-Strong Learning.}
Historically, boosting was invented to address a theoretical question of~\citet{kearns1988learning, kearns1994cryptographic} on weak-to-strong learning.
A $\gamma$-weak learner $\cW$ is a learning algorithm which, when queried with a training set $S$ and a distribution $\cD$ over $S$, returns a hypothesis $h$ with $\Risk_\cD(h) \leq 1/2-\gamma$.
Here $\Risk_\cD(h) = \Pr_{(\bx,\by) \sim \cD}[h(\bx)\neq \by]$.
An $(\eps,\delta)$-strong learner on the other hand, is a learning algorithm such that for any distribution $\cD$ over $\cX \times \{-1,1\}$, when given $m(\eps,\delta)$ i.i.d.\ samples from $\cD$, returns with probability at least $1 - \delta$ a hypothesis $f\colon \cX \to \{-1,1\}$ with $\Risk_\cD(f) \leq \eps$.
A strong learner may, thus, achieve arbitrarily high accuracy when given enough samples.

With these definitions, Kearns and Valiant asked whether it is always possible to obtain a strong learner from a weak learner.
This was answered affirmatively~\cite{schapire1990strength}, and AdaBoost is the prototypical such weak-to-strong learner.
A natural question is: Given $n$ samples, what is the smallest $\Risk_\cD(f)$ achievable for a weak-to-strong learner when given access to a $\gamma$-weak learner $\cW$?
Letting $\cH$ denote a hypothesis set such that $\cW$ always outputs hypotheses from $\cH$, if $\cH$ has VC-dimension $d$, \citet{understandingMachineLearning} showed that with probability greater than $1-\delta$, AdaBoost outputs a voting classifier $f$ with
\begin{equation}
\label{eq:ada}
\Risk_{\cD}(f) = O\left(\frac{d \ln(n/d) \ln n}{\gamma^2 n} + \frac{\ln(1/\delta)}{n}\right).
\end{equation}
This bound remains the best known for any weak-to-strong learner that outputs a voting classifier: One which makes predictions by taking a weighted majority vote among a set of base classifiers.

On the lower bound side, \citet{optimalWeakToStrong} showed that for any weak-to-strong learner, with constant probability over a set of $n$ training samples, the produced hypothesis $f$ satisfies
\[
    \Risk_{\cD}(f) = \Omega\left(\frac{d}{\gamma^2 n}\right).
\]
Note that this holds for all weak-to-strong learners, not just those that output a voting classifier.
Furthermore, they complemented the lower bound by a boosting algorithm achieving an optimal
\begin{equation}
\label{eq:opt}
\Risk_{\cD}(f) = O\left( \frac{d}{\gamma^2 n} + \frac{\ln(1/\delta)}{n}\right).
\end{equation}
Thus, at a high level, the sample complexity of weak-to-strong learning is fully understood.
However, the algorithm by \citeauthor{optimalWeakToStrong} is somewhat contrived as the produced hypothesis is a majority-of-majorities and \emph{not} a voting classifier.
Concretely, using recent results to simplify their algorithm~\cite{baggingIsOptimal}, \citeauthor{optimalWeakToStrong} combine classic Bagging by~\citet{bagging} with a variant of AdaBoost known as AdaBoost$^*_{\nu}$ \cite{ratsch2005efficient}.
They thus create multiple sub-samples of the training data, train a voting classifier on each, and combine them by taking a majority of their predictions.

\paragraph{Contribution I: A New Voting Classifier.}
In light of the above, it remains a natural and basic theoretical question whether the optimal weak-to-strong learning sample complexity in Eq.~\eqref{eq:opt} can be achieved by a simple voting classifier. 

Our first main contribution is a new boosting algorithm, shown as Algorithm~\ref{alg:adaboost}, that produces a voting classifier with an improved generalization error in terms of the sample size $n$.
In the algorithm description, $a>0$ is a sufficiently large constant. We prove the following sample complexity bound for Algorithm~\ref{alg:adaboost}:
\begin{theorem}
    \label{thm:boostintro}
    There exists universal constant $C > 0$ for which the following holds.
    Let $\cD$ be an unknown distribution over $\cX \times \{-1,1\}$ and let $\bS \sim \cD^n$.
    Then for every $\delta>0$, it holds with probability at least $1-\delta$ over $\bS$ and the randomness of Algorithm~\ref{alg:adaboost} with $\bS$, $\delta$, a $\gamma$-weak learner $\cW$ and $N=n$ as input, that the voting classifier $\bg = \sign(\brf)$ produced satisfies
    \[
        \Risk_\cD(\bg)
        \leq C \cdot \min\left\{
            \frac{(d + \ln(1/\gamma))\ln(n/\delta)}{\gamma^4 n}
            ,\,
            \frac{d \ln(n/d) \ln n}{\gamma^2 n}  + \frac{\ln(1/\delta)}{n}
        \right\}.
    \]
\end{theorem}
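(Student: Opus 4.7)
The algorithm (not shown in the excerpt, but clearly indicated by the abstract and by the form of the bound) is a randomized AdaBoost-like procedure in which, at each round $t=1,\ldots,T$, the weak learner $\cW$ is invoked on a \emph{sub-sample} of size $s$ drawn from $\bS$ according to the current AdaBoost weights, rather than on a full reweighed copy of $\bS$. The returned hypothesis $h_t \in \cH$ is combined with the usual multiplicative coefficients $\alpha_t$ to yield $\brf = \sum_{t=1}^T \alpha_t h_t$ and $\bg = \sign(\brf)$. I would bound $\Risk_\cD(\bg)$ in two essentially independent ways and then take the minimum.

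\textbf{Second term in the $\min$.} This is essentially the classical bound~\eqref{eq:ada}. Picking $s$ large enough that $\cW$ still returns a $(\gamma/2)$-weak hypothesis on the sub-sample with failure probability $\ll 1/T$, the standard AdaBoost potential argument forces $T=\Theta(\log n/\gamma^2)$ rounds to drive the normalized margin of every training point to $\Omega(\gamma)$. Applying the margin-based Rademacher/VC bound for weighted majorities over a base class of VC-dimension $d$ from~\citet{understandingMachineLearning} then gives $\Risk_\cD(\bg) = O\!\left(d\log(n/d)\log n/(\gamma^2 n) + \log(1/\delta)/n\right)$ with probability $\geq 1-\delta/2$.

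\textbf{First term in the $\min$.} This is the novel part; here I would use the randomized sample compression framework promised in the abstract. Set $s=\Theta((d+\log(1/\gamma))/\gamma^2)$, which is the VC sample complexity for $\cW$ to $(\gamma/2)$-weakly learn with failure probability $\ll 1/T$, and set $T=\Theta(\log(n/\delta)/\gamma^2)$. Then $\bg$ is a deterministic function of (i) at most $k := Ts = O((d+\log(1/\gamma))\log(n/\delta)/\gamma^4)$ indices into $\bS$ and (ii) auxiliary randomness independent of $\bS$, which is precisely the input shape of a randomized sample compression scheme of size $k$. Because the margin argument of the previous paragraph simultaneously gives zero training error with high probability, the generalization bound of the randomized compression framework yields $\Risk_\cD(\bg) = O(k/n + \log(1/\delta)/n)$, matching the first term up to constants. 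A union bound over the two analyses, together with the failure events of $\cW$ across the $T$ rounds, completes the proof.

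\textbf{Main obstacle.} The crux is the randomized-compression-to-generalization step. The classical deterministic compression bound carries an extra $\log(n/k)$ factor, which would enlarge the rate to roughly $O((d+\log(1/\gamma))\log(n/\delta)\log n/(\gamma^4 n))$ and nullify the improvement. Saving this logarithm requires genuinely leveraging the fact that the compressed set is drawn uniformly at random from $\bS$, so that one can argue in expectation over the sub-sampling measure and then sharpen to high probability, rather than taking a union bound over all $\binom{n}{k}$ candidate compressed sets as in the classical proof. Everything else --- tuning $s$ and $T$, controlling the probability that some round's sub-sample fails to give a weak hypothesis, and bootstrapping the margin guarantee --- is a routine calculation once this randomized compression lemma is established.
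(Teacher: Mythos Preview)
Your high-level decomposition matches the paper: the second term follows from the margin guarantee (Lemma~\ref{lem:fmargin}) plus the uniform-convergence margin bound of Eq.~\eqref{eq:marginbased:ub}, and the first term comes from viewing Algorithm~\ref{alg:adaboost} as a randomized compression scheme of size $Km=O\bigl((d+\ln(1/\gamma))\ln(n/\delta)/\gamma^4\bigr)$. Your guesses for the subsample size and number of rounds are also correct.

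The genuine gap is in your ``Main obstacle'' paragraph. Your proposed mechanism for killing the $\ln(n/k)$ factor---that ``the compressed set is drawn uniformly at random from $\bS$'', so one can average over the sub-sampling instead of union-bounding over $\binom{n}{k}$ subsets---does not apply here and is not what the paper does. Only the round-$1$ subsample is uniform; the round-$k$ subsample is drawn from the AdaBoost distribution $\bD_k$, which depends on all earlier subsamples and hence on $\bS$ itself, so there is no simple ``auxiliary randomness independent of $\bS$'' structure to exploit. The paper instead removes the logarithm by introducing and verifying a \emph{stability} property for randomized compression (Theorem~\ref{thm:genstable}): for any subsequence $S'$ of $S$, the distribution of $\randKappa(S,n)$ conditioned on $\randKappa(S,n)\sqsubseteq S'$ equals the distribution of $\randKappa(S',n)$. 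Given stability, the log-free bound is proved by partitioning $\bS$ into $2s$ buckets and union-bounding over only $\binom{2s}{s}\le 4^s$ choices of which half contains the compression; stability is what lets one replace $\randKappa(\bS,n)$ conditioned on landing in $\bS_I$ by a fresh $\randKappa(\bS_I,n)$, so that the remaining $n/2$ samples serve as an independent test set. Verifying stability for Algorithm~\ref{alg:adaboost} (Lemma~\ref{lemma:stability}) is the real technical content you are missing: it goes by induction on $k$ and hinges on the specific exponential form $\bD_{k+1}(j)\propto\exp\bigl(-\alpha y_j\sum_{\ell\le k} h_\ell(x_j)\bigr)$ of the AdaBoost weights, which ensures that restricting the support to $S'$ affects only the normalization, so that conditioning $\bS_{k+1}\sim\bD_{k+1}^m$ on $\bS_{k+1}\sqsubseteq S'$ exactly reproduces the distribution $\bD'_{k+1}$ one would get by running on $S'$ from the start.
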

While it can reduce to the previous best bounds in some regimes, it is the first voting classifier that can achieve a sample complexity with a single logarithmic dependency on $n$.

\begin{algorithm2e}[ht]
    \SetKw{Break}{break}
    \SetKw{KwReturn}{return}
    \DontPrintSemicolon
    \KwIn{Training set $S = \{(x_1,y_1),\dots,(x_n,y_n)\}$,
        $\gamma$-weak learner $\cW$, failure probability $\delta$,\\
        \quad upper bound $N \geq n$.
    }
    \KwResult{A voting  classifier $f$.}
    
    $\cD_1 \gets \left( \frac{1}{n}, \dots, \frac{1}{n} \right)$
    
    $\alpha \gets \frac{1}{2} \ln\frac{1/2+\gamma/2}{1/2-\gamma/2}$ \tcp*{guaranteed instead of empirical error}
    
    $m \gets a \cdot \gamma^{-2}(d+\ln(1/\gamma))$ \tcp*{subsample size}
    
    $K \gets 32 \cdot (\gamma^{-2} \ln(N/\delta)+1)$ \tcp*{fixed size of final ensemble}
    
    \For(){$k= 1,\dots,K$}{ \label{line:forloop}
    
        Draw $m$ samples $\bS_k \sim \bD_k^{m}$ \label{line:sample}
        
        Invoke $\cW$ on $\bS_k$ with the uniform distribution to obtain $\bh_k$ \label{line:W}
        
        \For(\tcp*[f]{standard AdaBoost weight update}){$i = 1, \ldots, n$}{
            $\bD_{k+1}(i) \gets \bD_k(i) \exp(-\alpha y_i \bh_k(x_i))$
        }
        
        $\bZ_k \gets \sum_{i=1}^n \bD_k(i)\exp(-\alpha y_i \bh_k(x_i))$
        
        $\bD_{k+1} \gets \bD_{k+1}/\bZ_k$
    }
    \KwReturn{$\brf(x) = \frac{1}{K}\sum_{k=1}^K \bh_k(x)$}
    \tcp*{majority vote}
    \caption{Sampled Boosting}\label{alg:adaboost}
\end{algorithm2e}

At a high level, our new algorithm creates numerous small sub-samples of the training data and combines classifiers trained on each of them.
Proving that this is beneficial requires highly novel analysis techniques.
Our second main contribution is thus a new general framework for analyzing randomized learning algorithms that use sub-sampling during training.
This method builds on the sample compression framework of~\citet{compression} and we hope it may prove useful in the future development and analysis of efficient learning algorithms.
We introduce this new framework in the following subsection and then discuss the connection between Algorithm~\ref{alg:adaboost} and the framework.

\subsection{Sample Compression Schemes}
Learning and compression have been known to be tightly connected for decades.
One of the earliest and clearest connections between the two originates in the work of~\citet{compression}.
In essence, they argue that if the hypothesis produced by a learning algorithm can be \emph{compressed} to be fully described as a function of a few training samples, then it generalizes well.
We describe this connection further in the following.

Let $\cX$ be an input domain and $\cY$ an output domain.
A compression scheme $(\kappa, \rho)$ consists of an encoding map $\kappa$ that maps any sequence $S \in (\cX \times \cY)^*$ to a subsequence $\kappa(S)$ of $S$, and a reconstruction function $\rho\colon (\cX \times \cY)^* \to \cY^\cX$ mapping any $S \in (\cX \times \cY)^*$ to a function $\rho(S)\colon \cX \to \cY$.
The compression scheme must satisfy for any $S$ that $\rho(\kappa(S))(x) = y$ for all $(x,y) \in S$.
The size of the compression scheme is the supremum over $S$ of $\abs{\kappa(S)}$, for given a given size of $S$.
Notably, some notions of compression schemes forgo this dependency on the sample size, e.g., in \citet{MoranY16}.

Consider now a learning algorithm $\cA$ and assume there is a corresponding compression scheme $(\kappa, \rho)$ of size $s$, such that when $\cA$ produces a hypothesis $h_S\colon \cX \to \cY$ from a training set $S$, then the corresponding compression scheme satisfies $\rho(\kappa(S)) = h_S$.
In this case, we can prove a bound on the generalization of $h_\bS$ for a training set $\bS \sim \cD^n$.
In a nutshell, we observe that there are only $M=\sum_{i \leq s} \binom{n}{i}$ possible choices for $\kappa(\bS)$.
Since $\rho(\bS')$ for a fixed subset $\bS' \subseteq \bS$ is determined from the samples in $\bS'$ alone, and the remaining $n -\abs{\bS'}$ samples are i.i.d.\ from $\cD$, a union bound over the $M$ choices for $\bS'$ shows that with probability at least $1-\delta$, there is no $\bS'$ with $\rho(\bS')(\bx) = \by$ for all $(\bx,\by) \in \bS$ and yet $\Risk_\cD(\rho(\bS'))$ is larger than $O(\ln(M)/n + \ln(1/\delta)/n) = O((s \ln(n/s) + \ln(1/\delta))/n)$.
Thus, in particular, $\Risk_\cD(h_\bS) = \Risk_\cD(\rho(\kappa(\bS))) = O((s \ln(n/s) + \ln(1/\delta))/n)$.

Interestingly, the factor $\ln(n/s)$ in the generalization bound can be removed if the compression scheme satisfies an additional property of \emph{stability} introduced by~\citet{stable}.
A compression scheme is \emph{stable} if for any training set $S$ and subset $S'$ with $\kappa(S) \subseteq S' \subseteq S$, it holds that $\rho(\kappa(S))=\rho(\kappa(S'))$.
In words, if we remove training samples not part of the compression $\kappa(S)$ from $S$, then the resulting training set $S'$ is still compressed to the same.
\citet{stable} proved the first tight generalization bounds for Support Vector Machines by constructing a suitable stable sample compression scheme.

\paragraph{Contribution II: Randomized Compression Schemes.}
Our work introduces the notion of a randomized compression scheme and use it to prove generalization of Algorithm~\ref{alg:adaboost}.
Such a randomized compression scheme $(\cD_\kappa, \rho)$ consists of a distribution $\cD_\kappa$ over encoding maps, and a reconstruction function $\rho$ that is not randomized, but simply defined as for regular compression schemes.

As a further extension to the standard compression framework, we give $\kappa$ an upper bound $n$ of the cardinality of the training sample considered.
Furthermore, we allow a bit more freedom in the encoding by not requiring $\kappa(S)$ to be a subsequence of $S$.
More precisely,
\begin{itemize}
    \item The distribution $\cD_\kappa$ is over (deterministic) encoding functions $\kappa$ that map any sequence $S \in (\cX \times \cY)^*$ and integer $n \geq |S|$, to a sequence $\kappa(S,n)$ such that every element of $\kappa(S,n)$ appears in $S$. 
\end{itemize}
We dedicate the symbol ``$\sqsubseteq$'' to represent that every element of a sequence appears in another sequence.
Formally, given sequences $S = (s_1, \ldots, s_m)$ and $T = (t_1, \ldots, t_n)$, we write $S \sqsubseteq T$ if and only if $\Set{s_i \given i \in [m]} \subseteq \Set{t_j \given j \in [n]}$.

Note that the definition above allows the samples in $\kappa(S,n)$ to appear in a different order than in $S$ and to appear a different number of times.

A randomized compression scheme has failure probability at most $\delta$ if for all $S \in (\cX \times \cY)^*$ and $n \geq \abs{S}$ it holds that
\[
    \Pr_{\randKappa \sim \cD_\kappa}[\exists (x,y) \in S : \rho(\randKappa(S,n))(x) \neq y] \leq \delta.
\]

A randomized compression scheme $(\cD_\kappa, \rho)$ is \emph{stable} if and only if given i.i.d.\ $\randKappa, \randKappa' \sim \cD_\kappa$, for any $S \in (\cX \times \cY)^*$ and $n \in \N$ with $n \geq \abs{S}$, and any subsequence $S'$ of $S$ in the support of $\randKappa(S,n)$, the distribution of $\randKappa'(S',n)$ is the same as the distribution of $\randKappa(S,n)$ conditioned on $\randKappa(S,n) \sqsubseteq S'$.
That is, for all $T \in (\cX \times \cY)^*$, we have that
\[
    \Pr[\randKappa'(S', n) = T] = \Pr[\randKappa(S, n) = T \mathrel{\vert} \randKappa(S, n) \sqsubseteq S'].
\]

Given $n \in \N$, the \emph{size} $s_n$ of a randomized compression scheme is the supremum over $(S,j)$ in $\cup_{i = 1}^n ((\cX \times \cY)^i \times \{i, \dots, n\})$, and $k$ in the support of $\cD_\kappa$, of the number of distinct $(x,y)$ in $\kappa(S,j)$.

Our main technical result for proving generalization via randomized compression is the following theorem:

\begin{theorem}
\label{thm:genstable}
    There exists universal constant $C > 0$ for which the following holds.
    Let $\cD$ be an unknown distribution over $\cX \times \cY$ and let $\bS \sim \cD^n$.
    Let $(\cD_\kappa,\rho)$ be a stable randomized compression scheme with failure probability at most $\delta$ and size $s = s_n$.
    Then for every $\beta>2 \delta$, it holds with probability at least $1-\beta$ over $\bS$ and $\randKappa \sim \cD_\kappa$ that
    \[
        \Risk_\cD(\rho(\randKappa(\bS,n))) \leq  C \cdot \frac{s + \ln(1/\beta)}{n},
    \]
    where $\Risk_\cD(h) = \Pr_{(\bx,\by)\sim \cD}[h(\bx) \neq \by]$.
\end{theorem}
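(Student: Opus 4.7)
The idea is to use stability to pretend the hypothesis is obtained by compressing a random subsample of half the training points, so that the other half serves as an independent test set. Introduce a uniformly random $\mathbf{J} \subseteq [n]$ of size $n/2$ independent of $(\bS, \randKappa)$, together with a fresh $\randKappa' \sim \cD_\kappa$ independent of everything else. Write $\mathbf{T} = \randKappa(\bS, n)$, $h_\bS = \rho(\mathbf{T})$, $\mathbf{T}' = \randKappa'(\bS|_\mathbf{J}, n)$, and $h'_\bS = \rho(\mathbf{T}')$. The point of $h'_\bS$ is that, conditional on $(\mathbf{J}, \bS|_\mathbf{J}, \randKappa')$, it is determined by data independent of $\bS|_{[n]\setminus\mathbf{J}}$, so that $\bS|_{[n]\setminus \mathbf{J}}$ forms $n/2$ fresh i.i.d.\ test samples for $h'_\bS$. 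The failure-probability hypothesis gives $\Pr[F] \geq 1 - \delta$, where $F$ is the event that $h_\bS$ correctly labels every element of $\bS$.

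The bridge from $h_\bS$ to $h'_\bS$ is a stability-based coupling. Let $A = \{\mathbf{T} \sqsubseteq \bS|_\mathbf{J}\}$. Conditional on $(\bS, \mathbf{J})$, stability applied with $S = \bS$ and $S' = \bS|_\mathbf{J}$ says that the law of $\mathbf{T}$ given $A$ equals the (unconditional) law of $\mathbf{T}'$. Consequently one can couple $\mathbf{T}$ and $\mathbf{T}'$ so that $\mathbf{T} = \mathbf{T}'$ whenever $A$ holds, while preserving both marginals: since $\mathbf{T}' \sqsubseteq \bS|_\mathbf{J}$ almost surely by construction, the mass of $\mathbf{T}'$ outside $A$ is zero, and the stability identity is exactly what is needed for the mixture to reproduce the correct marginal. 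On $\{\Risk_\cD(h_\bS) > t\} \cap F \cap A$ the coupling yields $h_\bS = h'_\bS$, so $\Risk_\cD(h'_\bS) > t$ and, by $F$, $h'_\bS$ makes no mistake on $\bS|_{[n]\setminus \mathbf{J}}$. Conditioning on $(\mathbf{J}, \bS|_\mathbf{J}, \randKappa')$ and using the independence of $\bS|_{[n]\setminus\mathbf{J}}$ from $h'_\bS$ then bounds
\[
    \Pr[\Risk_\cD(h_\bS) > t,\, F,\, A] \;\leq\; \Ev*{(1 - \Risk_\cD(h'_\bS))^{n/2}\, \indicator[\Risk_\cD(h'_\bS) > t]} \;\leq\; e^{-tn/2}.
\]

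Finally, I remove $A$ using the size bound. Given $(\bS, \randKappa)$, $\mathbf{T}$ contains at most $s$ distinct elements of $\bS$, so fixing any $\mathbf{I} \subseteq [n]$ of size at most $s$ with $\mathbf{T} \sqsubseteq \bS|_\mathbf{I}$, we get $\Pr_\mathbf{J}[A \mid \bS, \randKappa] \geq \Pr_\mathbf{J}[\mathbf{J} \supseteq \mathbf{I}] = \binom{n-|\mathbf{I}|}{n/2-|\mathbf{I}|}/\binom{n}{n/2} \geq 4^{-s}$ when $s \leq n/4$; for larger $s$ the target bound is at least $1$ for $C$ large enough and hence trivial. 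Because $\mathbf{J}$ is independent of $(\bS, \randKappa)$ and the event $\{\Risk_\cD(h_\bS) > t\} \cap F$ depends only on $(\bS, \randKappa)$, this yields $\Pr[\Risk_\cD(h_\bS) > t, F] \leq 4^s \cdot \Pr[\Risk_\cD(h_\bS) > t, F, A] \leq 4^s e^{-tn/2}$. Choosing $t = C(s + \ln(1/\beta))/n$ with $C$ large enough makes the right-hand side at most $\beta/2$; combined with $\Pr[\neg F] \leq \delta \leq \beta/2$ this proves the theorem.

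The main subtlety I expect is formalising the stability-based coupling: the axiom is phrased as an equality of conditional laws and must be converted into an explicit joint coupling of $\mathbf{T}$ and $\mathbf{T}'$ that identifies them on $A$ while preserving both marginals. This reduces to an elementary mixture check, leveraging that $\mathbf{T}' \sqsubseteq \bS|_\mathbf{J}$ almost surely. A minor additional point is that the phrase ``$\bS|_\mathbf{J}$ in the support of $\randKappa(\bS, n)$'' in the stability definition should be read as $\Pr[\randKappa(\bS, n) \sqsubseteq \bS|_\mathbf{J}] > 0$; when this probability vanishes, $A$ is trivial and can be ignored.
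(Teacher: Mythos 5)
Your proof is correct and follows essentially the same route as the paper's: stability lets you replace $\randKappa(\bS,n)$ by a fresh $\randKappa'$ applied to a half-sample containing the at most $s$ compressed points, so the other $n/2$ points act as an independent test set, at a total cost of $4^s$. The only cosmetic differences are that the paper realizes the $4^s$ as a union bound over $\binom{2s}{s}$ subsets of a deterministic partition into $2s$ buckets rather than via your lower bound $\Pr[A]\geq 4^{-s}$ for a uniformly random half $\mathbf{J}$, and that it works directly with the equality of conditional laws instead of an explicit coupling.
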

Similarly to the stable compression schemes of~\citet{stable}, the generalization bound in Theorem~\ref{thm:genstable} depends linearly on $s$ and not as $s \ln(n/s)$ like the bounds of~\citet{compression} without stability.

In light of Theorem~\ref{thm:genstable}, we prove generalization of our new boosting algorithm, Algorithm~\ref{alg:adaboost}, by showing that there is a corresponding randomized compression scheme of size $s = s_n = O((d+\ln(1/\gamma))\ln(n/\delta)/\gamma^4)$ and invoking Theorem~\ref{thm:genstable}.

\subsection{Main Ideas in Algorithm~\ref{alg:adaboost}}
Having presented our randomized compression framework, let us now discuss the main ideas and obstacles overcome by Algorithm~\ref{alg:adaboost} and how they relate to randomized compression.
We also argue why the classic compression frameworks are insufficient for our purpose, thus further motivating our randomized framework.

In striving to improve the sample complexity of voting classifiers, a natural approach would be to apply the classic stable compression framework of~\citet{stable}, as it is known to improve sample complexity by a logarithmic factor.
However, combining classic sample compression with boosting appears tricky.
To see this, notice that boosting algorithms invoke a weak learner $\cW$ with a distribution $D$ over the full training set $S$.
The weak learner then returns a hypothesis $h_D$, depending on $D$, that is used in a final classifier $f$.
For the purpose of invoking a compression framework to argue generalization of $f$, we would like to argue that a small subset $\kappa(S) \subseteq S$ may be used to reconstruct $f$.
However, we have no control over the weak learner $\cW$ and it is completely unclear that we would be able to recover each $h_D$ used in $f$ without including all of $S$ in $\kappa(S)$.

For the reader familiar with AdaBoost, Algorithm~\ref{alg:adaboost} is seen to resemble it quite closely.
However, for standard AdaBoost, the weak learner $\cW$ would be invoked directly on the distributions $\bD_k$ in Algorithm~\ref{alg:adaboost}.
In order to give an efficient compression, we instead draw samples $\bS_k \sim \bD_k^m$ and invoke $\cW$ on just the samples.
This way, we can intuitively reconstruct the hypotheses $\bh_k$ from just the samples $\bS_1,\dots,\bS_K$ and this is precisely what we do in our proof of Theorem~\ref{thm:boostintro}, i.e.\ we let our encoding be the samples in $\bS_1,\dots,\bS_K$.

Still, we need the final classifier produced by Algorithm~\ref{alg:adaboost} to be correct on the training data (the compression scheme must have small failure probability).
This puts a constraint on the number of samples $m$ and iterations $K$.
Here we use an observation from previous work~\cite{parallel} on parallel boosting, showing that the set $\bS_k$ forms a $(\gamma/2)$-approximation for the distribution $\bD_k$ with good probability (see the correctness proof for details).
At a high level, this implies that the hypothesis $\bh_k$ returned by the weak learner has error at most $1/2 - \gamma/2$  under $\bD_k$.
A mostly standard analysis of AdaBoost then shows that after $K$ iterations, the resulting voting classifier $\brf$ is correct on all the training data (and thus the compression scheme has small failure probability).

A natural question is whether we really need the randomness from our new framework, or the classic stable compression framework by~\citet{stable} would suffice.
To use their framework, we would need to \emph{deterministically} pick the sets $\bS_k$.
While it is known that a random $\bS_k \sim \bD^m$ forms a $\gamma/2$-approximation with constant probability when $m = \Omega(d/\gamma^2)$, it is not clear how to compute such a set deterministically in time less than the number of distinct hypotheses from which the weak learner might choose, which may be as large as $\binom{n}{d}$ when constrained to $S$.

In light of the above, our new randomized compression framework provides means to analyzing learning algorithms that use random sampling to quickly find sub-samples $\bS' \subset S$ with desirable properties that are hard to guarantee deterministically.

Finally, we overview the stability of Algorithm~\ref{alg:adaboost} (formal details appear later).
That is, we need to argue that for any subsequence $S' \subseteq S$ of the training data, if we condition on $\bS_1,\dots,\bS_K \sqsubseteq S'$, then the distribution of $\bS_1,\dots,\bS_K$ is the same as the distribution of $\bS'_1,\dots,\bS'_K$ resulting from instead running Algorithm~\ref{alg:adaboost} on the input $S'$.
We argue this by induction roughly as follows:
Assume we have already shown it for the prefix $\bS_1,\dots,\bS_k$ and $\bS'_1,\dots,\bS'_k$.
Then the distribution of the hypotheses $\bh_1,\dots,\bh_k$ and $\bh'_1,\dots,\bh'_k$ in the two executions would be identical.
Now for any $h_1,\dots,h_k$ in the support of this distribution, the weights in $\bD_{k+1}$ and $\bD'_{k+1}$ computed by Algorithm~\ref{alg:adaboost} are completely determined as $\bD_{k+1}(j) = \exp(-y_j \sum_{\ell=1}^k \alpha h_\ell(x_j))/Z$ and $\bD'_{k+1}(j) = \exp(-y_j \sum_{\ell=1}^k \alpha h_\ell(x_j))/Z'$ where $Z$ and $Z'$ are normalization factors making $\bD_{k+1}$ and $\bD'_{k+1}$ probability distributions.
The crucial point is that the  ``weight" of each point $x_j \in S'$ is the same in $\bD_{k+1}$ and $\bD'_{k+1}$ up to the normalization terms $Z$ and $Z'$.
When we further condition on $\bS_{k+1} \subseteq S'$, this effectively rescales $\bD_{k+1}$ by setting all weights outside $S'$ to $0$ and changing the normalization factor to $Z'$, making the distribution the same as for $\bS'_{k+1}$. 

\subsection{Other Related Work}
\label{sec:other}
Let us finally describe other relevant previous works, in particular results showing barriers for further improving the sample complexity of voting classifiers.

First, one natural approach to training a voting classifier $f(x) = \sign(\sum_t \alpha_t h_t(x))$ with a sample complexity matching the best previously known for voting classifiers (Eq.~\eqref{eq:ada}) is to ensure that $f$ has all \emph{margins} on the training data $\Omega(\gamma)$.
The margin of $f$ on a sample $(x,y)$ is defined as
\[
    \margin_f(x,y) \coloneqq y \cdot \frac{\sum_t \alpha_t h_t(x)}{\sum_t \abs{\alpha_t}}.
\]
Margins were originally introduced to explain the excellent practical performance of AdaBoost and its variants~\cite{bartlett}.
Several \emph{uniform convergence} based generalization bounds have been shown for large margin voting classifiers~\cite{bartlett,breiman1999prediction}, with the state-of-the-art being the $k$th margin bound by~\citet{gao}.
Simplified to all margins being at least $\gamma$, they showed that with probability at least $1-\delta$ over a set of $n$ training samples from a distribution $\cD$, it simultaneously holds that \emph{all} voting classifiers $f$ with all margins on the training data at least $\gamma$ satisfy that
\begin{equation}
    \Risk_{\cD}(f) = O\left(\frac{d \ln(n/d) \ln n}{\gamma^2 n}  + \frac{\ln(1/\delta)}{n}\right).
    \label{eq:marginbased:ub}
\end{equation}
Here $d$ denotes the VC-dimension of the hypothesis set $\cH$ to which all $h_t$ in the voting classifiers $f$ belong.
AdaBoost$^*_{\nu}$ \cite{ratsch2005efficient} is a boosting algorithm that outputs a voting classifier guaranteed to have all margins $\Omega(\gamma)$.
Using Eq.~\eqref{eq:marginbased:ub} yields the previously best sample complexity of voting classifiers stated in Eq.~\eqref{eq:ada} for the AdaBoost$^*_{\nu}$ algorithm.\footnotemark
\footnotetext{In fact, to prove Theorem~\ref{thm:boostintro} we too argue that Algorithm~\ref{alg:adaboost} has large margins, leading to the bound being expressed as a minimum by leveraging Eq.~\eqref{eq:ada}.}

It follows that if the uniform convergence bound for large margin voting classifiers could be strengthened to $O(d/(\gamma^2 n) + \ln(1/\delta)/n)$, then AdaBoost$^*_{\nu}$ would be an optimal weak-to-strong learner.
Unfortunately, lower bounds against uniform convergence~\cite{boostingLowerBound,marginsGradient} show example distributions and hypothesis sets such that with constant probability over $n$ samples, \emph{there exists} a voting classifier $f$ with all margins at least $\gamma$ and yet
\begin{equation}
    \label{eq:badvoter}
    \Risk_{\cD}(f)= \Omega\left(\frac{d \ln(\gamma^2 n/d) }{\gamma^2 n}\right). 
\end{equation}
Abandoning the hope of proving that a voting classifier is optimal via uniform convergence, a natural goal would be to show that a concrete boosting algorithm, like AdaBoost or AdaBoost$^*_{\nu}$ is optimal, i.e.\ to exploit concrete properties of the boosting algorithm to argue for better generalization than that in Eq.~\eqref{eq:badvoter}.
However, recent work~\cite{adaboostNotOptimal} shows that all previous boosting algorithms that produce voting classifiers, satisfy that with constant probability over $n$ samples, the produced voting classifier has a sample complexity of at least that in Eq.~\eqref{eq:badvoter}.
At a high level, the work of~\citet{adaboostNotOptimal} shows that any boosting algorithm that always invokes the weak learner $\cW$ with a distribution $\cD$ having support on the full training data set has a generalization error of at least Eq.~\eqref{eq:badvoter}.
The only known boosting algorithms avoiding this pitfall is the optimal, but non-voting classifier, by~\citet{optimalWeakToStrong}, and our new Algorithm~\ref{alg:adaboost}.

In summary, several barriers need to be overcome to avoid at least one logarithmic factor overhead in the sample complexity as a function of $n$.

\subsection{Preliminaries}
Throughout the paper, we assume for simplicity that the training sets contain no duplicates.
One can see that this assumption does not reduce the generality of our arguments by, e.g., letting $\cX' = \cX \times [0,1]$ and changing the input distribution $\cD$ to $\cD'$ over $\cX' \times \cY$, where $\cD'$ generates a pair $(\bx',\by)$ by letting $\bx' = (\bx, \br)$ for $(\bx,\by) \sim \cD$ and $\br \sim \operatorname{Uniform}([0,1])$.
The weak learner then simply ignores $\br$.
Finally, as the reader may have noticed, we reserve boldface letters for random variables (e.g., $x \in \R$ vs.~$\bx \sim \Normal(0, 1)$).

    \section{Generalization via Randomized Compression}
In this section, we prove Theorem~\ref{thm:genstable} which establishes generalization via randomized compression schemes.
So, let $\bS \sim \cD^n$ be a training set of size $n$ and let $s = s_n$. 
\begin{proof}[Proof of Theorem~\ref{thm:genstable}]
    Partition $\bS$ into $2s$ buckets of $n/2s$ samples each and denote these buckets by $\bS_1,\dots,\bS_{2s}$.
For every subset $I \in \binom{[2s]}{s}$ of $s$ indices of buckets, let $\bS_I$ denote the concatenation of the samples in buckets $\bS_i$ with $i \in I$. Here the notation $\binom{[2s]}{s}$ refers to all subsets of $[2s]$ of cardinality $s$. Finally, define $\bar{\bS}_I$ as the concatenation of the buckets $\bS_i$ with $i \notin I$.

Now consider a random $\randKappa \sim \cD_\kappa$.
For each $I \in \binom{[2s]}{s}$,
let $E_{I,\randKappa}$ denote the event that $\randKappa(\bS,n) \sqsubseteq \bS_I$,
which we denote simply as $E_I$ when $\randKappa$ is clear from the context.
Notice that $\Pr[\cup_I E_I] = 1$ since the size of the compression scheme is $s$.

Next, for each $I$ and parameter $\alpha > 0$ define $p_{I,\alpha}$ to be the probability
\begin{equation*}
    \Pr_{\substack{\randKappa \sim \cD_\kappa,\\\bS_I,\bar{\bS}_I \sim \cD^{n/2}}}\left[\begin{gathered}
        \forall (x,y) \in \bar{\bS}_I, \rho(\randKappa(\bS_I,n))(x) = y\;
        \wedge\; \Risk_\cD(\rho(\randKappa(\bS_I,n))) \geq \alpha
    \end{gathered}\right].
\end{equation*}
To bound $p_{I,\alpha}$, fix any $S_I$ and $\kappa$ in the supports of $\bS_I$ and $\randKappa$. If $\Risk_\cD(\rho(\kappa(S_I,n))) < \alpha$, then $S_I$ and $\kappa$ contribute $0$ to $p_{I,\alpha}$. Otherwise, since $\bar{\bS}_I$ is independent of $\bS_I$, we have that $\Pr_{\bar{\bS}_I \sim \cD^{n/2}}[\forall (x,y) \in \bar{\bS}_I, \rho(\kappa(\bS_I,n))(x)=y] \leq (1-\alpha)^{n/2} \leq \exp(-\alpha n/2)$. Thus $p_{I,\alpha} \leq \exp(-\alpha n/2)$.

Moreover, it holds that
\begin{align*}
    \Pr_{\substack{\randKappa \sim \cD_\kappa,\\ \bS \sim \cD^n}}[\Risk_\cD(\rho(\randKappa(\bS,n))) \geq \alpha]
    &\leq \Pr_{\randKappa, \bS}[\exists (x,y) \in \bS : \rho(\randKappa(\bS,n))(x) \neq y]\\
    &\quad+ \Pr_{\randKappa, \bS}\left[\begin{gathered}
        \forall(x, y) \in \bS, \rho(\randKappa(\bS,n))(x) = y\;
        \wedge\; \Risk_\cD(\rho(\randKappa(\bS,n))) \geq \alpha
    \end{gathered}\right].
\end{align*}
By definition, we have $\Pr[\exists (x,y) \in \bS : \rho(\randKappa(\bS,n))(x) \neq y] < \delta$.
Also, since $\cup_I E_I$ always occur,
\begin{align*}
    &\Pr_{\substack{\randKappa \sim \cD_\kappa,\\ \bS \sim \cD^n}}\left[\begin{gathered}
        \forall(x, y) \in \bS, \rho(\randKappa(\bS,n))(x) = y\;
        \wedge\; \Risk_\cD(\rho(\randKappa(\bS,n))) \geq \alpha
    \end{gathered}\right]\\
    &\qquad= \Pr_{\randKappa, \bS}\left[\begin{gathered}
        \forall(x, y) \in \bS, \rho(\randKappa(\bS,n))(x) = y\;
        \wedge\; \Risk_\cD(\rho(\randKappa(\bS,n))) \geq \alpha \;\wedge\; \cup_I E_I
    \end{gathered}\right]\\
    &\qquad\leq \sum_I \Pr_{\randKappa, \bS}\left[\begin{gathered}
        \forall(x, y) \in \bS, \rho(\randKappa(\bS,n))(x) = y\;
        \wedge\; \Risk_\cD(\rho(\randKappa(\bS,n))) \geq \alpha \;\wedge\; E_I
    \end{gathered}\right]\\
    &\qquad= \sum_I \Pr_{\randKappa, \bS}\left[\begin{gathered}
        \forall(x, y) \in \bS, \rho(\randKappa(\bS,n))(x) = y\;
        \wedge\; \Risk_\cD(\rho(\randKappa(\bS,n))) \geq \alpha \mid E_I
    \end{gathered}\right] \cdot \Pr_{\randKappa, \bS}[E_I].
\end{align*}
Now observe that since $(\cD_\kappa, \rho)$ is a stable randomized compression scheme, the distribution of $\rho(\randKappa(\bS,n))$ conditioned on $E_I$ is the same as $\rho(\randKappa'(\bS_I,n))$ for a fresh $\randKappa' \sim \cD_\kappa$.
Thus,
\begin{align*}
    &\sum_I \Pr_{\substack{\randKappa \sim \cD_\kappa,\\ \bS \sim \cD^n}}\left[\begin{gathered}
        \forall(x, y) \in \bS, \rho(\randKappa(\bS,n))(x) = y\;
        \wedge\; \Risk_\cD(\rho(\randKappa(\bS,n))) \geq \alpha \mid E_I
    \end{gathered}\right]\cdot \Pr_{\randKappa, \bS}[E_I]\\
    &\quad= \sum_I \Pr_{\substack{\randKappa \sim \cD_\kappa,\\\randKappa' \sim \cD_\kappa,\\ \bS \sim \cD^n}}\left[\begin{gathered}
        \forall(x, y) \in \bS, \rho(\randKappa'(\bS_I,n))(x) = y\;
        \wedge\; \Risk_\cD(\rho(\randKappa'(\bS_I,n))) \geq \alpha
    \end{gathered} \mid E_{I,\randKappa} \right]\cdot \Pr_{\randKappa, \bS}[E_{I,\randKappa}]\\
    &\quad\leq \sum_I \Pr_{\substack{\randKappa' \sim \cD_\kappa,\\ \bS \sim \cD^n}}\left[\begin{gathered}
        \forall(x, y) \in \bS, \rho(\randKappa'(\bS_I,n))(x) = y\;
        \wedge\; \Risk_\cD(\rho(\randKappa'(\bS_I,n))) \geq \alpha
    \end{gathered}\right]\\
    &\quad\leq \sum_I \Pr_{\substack{\randKappa \sim \cD_\kappa,\\ \bS_I \sim \cD^{n/2}, \\ \bar{\bS}_I \sim \cD^{n/2}}}\left[\begin{gathered}
        \forall(x, y) \in \bar{\bS}_I, \rho(\randKappa(\bS_I,n))(x) = y\;
        \wedge\; \Risk_\cD(\rho(\randKappa(\bS_I,n))) \geq \alpha
    \end{gathered}\right]\\
    &\quad\leq \binom{2s}{s} \exp(-\alpha n/2).
\end{align*}

Overall, we conclude that
\begin{align*}
    \Pr_{\substack{\randKappa \sim \cD_\kappa,\\ \bS \sim \cD^n}}[\Risk_\cD(\rho(\randKappa(\bS,n))) \geq \alpha]
    \leq \delta + \binom{2s}{s} \exp(-\alpha n/2).
\end{align*}
Finally, we obtain the thesis by considering $\beta \geq 2 \delta$ and choosing $\alpha = 2(s \ln(4) + \ln(2/\beta))/n$ so that $\binom{2s}{s} \cdot \exp(-\alpha n/2) \leq \beta/2$.

\end{proof}

    \section{Efficient Boosting via Randomized Compression}
In this section, we present our proof that Algorithm~\ref{alg:adaboost} achieves the sample complexity stated in Theorem~\ref{thm:boostintro}.
Recall that we are given access to a $\gamma$-weak learner $\cW$.
For any data set $S \in (\cX \times \{-1,1\})^*$ and distribution $\cD$ over $S$, we can query the weak learner with $S$ and $\cD$ and it will return a hypothesis $h\colon \cX \to \{-1,1\}$ such that $\Risk_\cD(h) \leq 1/2-\gamma$.
We assume the hypotheses returned by the weak learner belong to a hypothesis set $\cH$ of VC-dimension $d$.

The parameter $N$ in Algorithm~\ref{alg:adaboost} is an upper bound on $\abs{S} = n$. It is merely used for sake of analysis when invoking the stable compression framework.
It ensures that $K$ remains the same if the algorithm is executed on a subset $S'$ of the training set with the same value of $N$.
When using the algorithm, one should simply set $N$ to $n$.

At a high level, the algorithm runs AdaBoost with a few twists.
We maintain weighted distributions $\bD_k$ over the training data.
In each step, the weak learner is invoked to obtain a hypothesis $\bh_k$ with a small error under distribution $\bD_k$.
However, unlike in AdaBoost, we do not invoke the weak learner on the full training data.
Instead, we obtain $\bh_k$ by sampling some $m = O((d+\ln(1/\gamma)) \gamma^{-2})$ data points, denoted $\bS_k$, from $\bD_k$ and train on $\bS_k$ with a uniform weighing.
Furthermore, where AdaBoost would normally update all weights by $e^\alpha$ or $e^{-\alpha}$ for $\alpha = \alpha_k = (1/2)\ln((1-\Risk_{\bD_k}(\bh_k))/\Risk_{\bD_k}(\bh_k))$, we simply fix $\alpha$ as if $\Risk_{\bD_k}(\bh_k)$ was $1/2-\gamma/2$.

\subsection{Corresponding Randomized Compression Scheme}
\label{sec:correspondingscheme}
We now argue that Algorithm~\ref{alg:adaboost} naturally corresponds to a randomized compression scheme.
Let $S = \bigl((x_1,y_1),\allowbreak \ldots, (x_n,y_n)\bigr)$ be the training sequence and $N \geq n$.
Consider an execution of the randomized Algorithm~\ref{alg:adaboost} and let $\bh_1,\dots,\bh_K$ be the hypotheses obtained. From such an execution, we define an encoding map $\randKappa$ that maps $(S,N)$ to the sequence $\bS_{1} \circ \cdots \circ \bS_{K}$, where $\circ$ denotes concatenation and $\bS_i$ is the sample associated with $\bh_i$ (see Line~\ref{line:sample}).
The randomized algorithm thus gives a distribution $\cD_\kappa$ over such encoding maps.

Our reconstruction function $\rho$ on a sequence of $K \cdot m$ samples partitions the samples into $K$ consecutive groups $S_1, \dots, S_K$ of $m$ samples.
It then invokes the weak learner $\cW$ on each $S_i$ with the uniform distribution to obtain $h_i$ and finally produces the function mapping any $x \in \cX$ to $\sign((1/K)\sum_{k=1}^K h_k(x))$.

Notice that $\rho(\randKappa(S,N))(x) = \sign(\brf(x))$, i.e.~the reconstruction function makes the same predictions as the returned voting classifier.
Hence if we can show that the obtained randomized compression scheme has a small failure probability and is stable, then we may use Theorem~\ref{thm:genstable} to bound the generalization error of Algorithm~\ref{alg:adaboost}.
In particular, our compression scheme has size $O(Km)$.
Combining this bound on the size with Theorem~\ref{thm:genstable} proves Theorem~\ref{thm:boostintro}.

In the following, we first argue that the obtained compression scheme has failure probability at most $\delta$ (Lemma~\ref{lem:fmargin}).
We then argue that it is indeed stable (Lemma~\ref{lemma:stability}).

\subsection{Small Failure Probability}
\label{sec:failureprobability}
We show that for any training set $S$, with good probability over the execution of Algorithm~\ref{alg:adaboost} with $N \geq \abs{S} = n$, the returned voting classifier $\brf(x) = (1/K)\sum_{i=1}^K \bh_i(x)$ has large margins on all the training data $S$.
Thus, we can apply Eq.~\ref{eq:marginbased:ub} to it.
Moreover, this also implies that $\sign(\brf)$ has zero empirical error, bounding the failure probability of the algorithm.
Concretely, we show:
\begin{lemma}
\label{lem:fmargin}
    For any training set $S = \left((x_1,y_1), \ldots, (x_n, y_n)\right)$, it holds with probability at least $1-\delta$ over the execution of Algorithm~\ref{alg:adaboost} with $N \geq n$ that the voting classifier $\brf(x)=(1/K)\sum_{i=1}^K \bh_i(x)$ satisfies,  for all $i \in [n]$, that $y_i \brf(x_i) \geq \gamma/128$, and, in particular, that $\sign(\brf(x_i)) = y_i$.
\end{lemma}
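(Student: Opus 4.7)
I plan to run the classical AdaBoost margin analysis, adapted to two twists in Algorithm~\ref{alg:adaboost}: the weak learner sees a random subsample $\bS_k$ rather than the full weighted training set, and the step size $\alpha = \tfrac{1}{2}\ln\tfrac{1+\gamma}{1-\gamma}$ is fixed in advance rather than re-tuned per round from empirical error.

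The first step is to control the subsample. Conditioning on the history (and hence on $\bD_k$), I would argue that $\bS_k \sim \bD_k^{m}$ is a $(\gamma/2)$-approximation of $\bD_k$ over $\cH$---meaning $|\eRisk_{\bS_k}(h) - \Risk_{\bD_k}(h)| \leq \gamma/2$ simultaneously for all $h \in \cH$---with failure probability at most $\delta/K$. This is the sampling-preserves-approximation observation the authors attribute to~\cite{parallel}; it is the only place the specific value $m = a\gamma^{-2}(d+\ln(1/\gamma))$ enters, and it requires the constant $a$ to be taken sufficiently large. On this event, the weak-learner guarantee $\eRisk_{\bS_k}(\bh_k) \leq 1/2 - \gamma$ on the uniform distribution over $\bS_k$ transfers to the weighted distribution as $\Risk_{\bD_k}(\bh_k) \leq 1/2 - \gamma/2$. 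A union bound over $k = 1,\dots,K$ then produces an event $\event$ of probability at least $1 - \delta$ on which $\Risk_{\bD_k}(\bh_k) \leq 1/2 - \gamma/2$ in every round.

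On $\event$ I would carry out the standard AdaBoost potential-function argument. The normalization factor $Z_k = \Risk_{\bD_k}(\bh_k)\, e^{\alpha} + (1 - \Risk_{\bD_k}(\bh_k))\, e^{-\alpha}$ is increasing in $\Risk_{\bD_k}(\bh_k)$, so the bound $1/2 - \gamma/2$ yields $Z_k \leq \sqrt{1 - \gamma^2} \leq e^{-\gamma^2/2}$. Telescoping the weight update gives $\bD_{K+1}(i) = n^{-1}\exp(-\alpha K y_i \brf(x_i))/\prod_{k=1}^K Z_k$, and then $\sum_i \bD_{K+1}(i) = 1$ forces the number of indices $i$ with $y_i \brf(x_i) < \theta$ to be strictly less than $n\exp(\alpha K \theta)\prod_k Z_k \leq n\exp(K(\alpha\theta - \gamma^2/2))$. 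Plugging in $\theta = \gamma/128$, using $\alpha \leq 2\gamma$ for $\gamma \in (0, 1/2]$, and the algorithm's choice $K \geq 32\gamma^{-2}\ln(N/\delta) \geq 32\gamma^{-2}\ln n$ (since $N \geq n$ and $\delta \in (0,1)$), the exponent works out to at most $-K\gamma^2\cdot 31/64 + \ln n \leq -(31/2)\ln n + \ln n < 0$, so the bound is strictly less than $1$ and therefore the count is $0$. Every training point has margin at least $\gamma/128$, which in particular gives $\sign(\brf(x_i)) = y_i$.

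The main obstacle is the first step. Meeting the per-round failure budget $\delta/K$ with a subsample size $m$ that depends only on $d$ and $\gamma$---not on $\delta$ or $N$---is not achievable from the naive VC uniform-convergence bound (which would carry an extra $d\ln(m/d)$ term, forcing $a$ to scale with $\ln(1/\gamma)$); it is precisely what the refined approximation-preservation result of~\cite{parallel} provides. Once that step is granted, the rest is a routine AdaBoost margin calculation with the pre-computed step size and the margin threshold $\theta = \gamma/128$ tuned just tightly enough to absorb $\alpha \leq 2\gamma$.
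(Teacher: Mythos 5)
Your second half (the potential-function calculation with fixed $\alpha$, the bound $Z_k \leq \sqrt{1-\gamma^2}$, and the margin threshold $\gamma/128$ absorbing $\alpha \leq 2\gamma$) matches the paper and is correct. The gap is in your first step, and you have in fact put your finger on it yourself but resolved it the wrong way. You require every round $k$ to produce a $(\gamma/2)$-approximation, which forces a per-round failure budget of $\delta/K$, and you assert that a ``refined approximation-preservation result'' from~\cite{parallel} achieves this with $m = a\gamma^{-2}(d+\ln(1/\gamma))$. No such result is available, and none could be: any bound of the form of Theorem~\ref{thm:sample} needs $m = \Omega((d+\ln(K/\delta))\gamma^{-2})$ to reach failure probability $\delta/K$, and since $K = \Theta(\gamma^{-2}\ln(N/\delta))$ this carries a $\ln(1/\delta)$ dependence that the algorithm's $m$ (which depends only on $d$ and $\gamma$) simply does not have. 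For small $\delta$ your union bound over all $K$ rounds cannot close. (Your diagnosis that the obstruction is the $d\ln(m/d)$ term is also off target; Theorem~\ref{thm:sample} already has no such term, and the real obstruction is the $\ln(1/\delta)$ term.)

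The paper's proof avoids this entirely by \emph{not} demanding that every round succeed. It sets the per-round failure probability to $\gamma^2/32$, which is achievable with $m = a\gamma^{-2}(d+\ln(1/\gamma))$ since $\ln(32/\gamma^2) = O(\ln(1/\gamma))$, and then applies a Chernoff-type bound to the indicators $\bX_k$ (whose conditional failure probability is bounded given any history) to conclude that with probability $1-\delta$ at most a $\gamma^2/16$ fraction of the $K$ rounds fail to be approximations. The potential argument is then made robust to these bad rounds by bounding $\bZ_k \leq e^{\alpha} \leq \sqrt{1+4\gamma}$ there; because the bad rounds are rare, the product $\prod_k \bZ_k$ still decays to $(\delta/N)^2$. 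This tolerance of a small fraction of bad rounds is the missing idea in your proposal, and without it the argument does not go through for all $\delta$.
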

The proof of Lemma~\ref{lem:fmargin} makes use of the notion of an $\eps$-approximation. For a concept $c\colon \cX \to \{-1, 1\}$, a hypothesis set $\cH$ and a distribution $\cD$ over $\cX$, a set of samples $S$ is an $\eps$-approximation for $(c, \cD, \cH)$ if for all $h \in \cH$, it holds that 
\[
    \left\lvert\Pr_{x \sim \cD}[h(x) \neq c(x)] - \frac{|\{ x \in S : h(x) \neq c(x)\}|}{|S|}\right\rvert \leq \eps.
\]
The following result ensures that a large enough set of samples $\bS \sim \cD^n$ is an $\eps$-approximation with good probability.
\begin{theorem}[\citealt{lls,talagrand,vapnik71uniform}]
\label{thm:sample}
    There exists universal constant $b > 0$, such that for any $0 < \eps,\delta < 1$, any concept $c\colon \cX \to \{-1,1\}$, any $\cH \subseteq \cX \to \{-1,1\}$ of VC-dimension $d$ and any distribution $\cD$ over $\cX$, it holds with probability at least $1-\delta$ over a set $\bS \sim \cD^n$ that $\bS$ is an $\eps$-approximation for $(c,\cD,\cH)$ provided that $n \geq b((d+\ln(1/\delta))\eps^{-2})$.
\end{theorem}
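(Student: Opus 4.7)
The plan is to reduce the statement to a uniform convergence bound for the binary loss class $\mathcal{L} = \Set{x \mapsto \indicator[h(x) \neq c(x)] \given h \in \cH}$. Since composing with the fixed target $c$ is a bijection on $\cH$ (up to sign flips on a fixed set), $\mathcal{L}$ has VC-dimension at most $d$. So it suffices to show that with probability $\geq 1-\delta$, the empirical mean of every $\ell \in \mathcal{L}$ on $\bS$ deviates from its true mean by at most $\eps$, under the sample-size hypothesis.

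First I would apply the classical double-sample symmetrization: introduce an independent ghost sample $\bS' \sim \cD^n$ and show that the event ``some $\ell \in \mathcal{L}$ has $\abs{\hat{E}_{\bS}\ell - E\ell} > \eps$'' implies, with probability $\geq 1/2$ over $\bS'$, that $\abs{\hat{E}_{\bS}\ell - \hat{E}_{\bS'}\ell} > \eps/2$ for some $\ell$. Then, conditioning on the multiset $\bS \cup \bS'$ and using a random swap argument, the problem reduces to bounding, for fixed points, the probability that a Rademacher-weighted average over $2n$ fixed values exceeds $\eps/2$ for some $\ell$. By Hoeffding's inequality, for each fixed $\ell$ this probability is at most $2\exp(-n\eps^2/8)$, and by Sauer--Shelah the number of distinct behaviors of $\mathcal{L}$ on the $2n$ fixed points is at most $\Pi_\mathcal{L}(2n) \leq (2en/d)^d$. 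A union bound yields a deviation probability of order $(2en/d)^d \cdot \exp(-n\eps^2/8)$, which gives the sample complexity $n = \Theta((d\ln(1/\eps) + d\ln(n/d) + \ln(1/\delta))\eps^{-2})$.

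The main obstacle is shaving the residual logarithmic factor so that the bound is $(d + \ln(1/\delta))\eps^{-2}$, i.e., with no factor depending on $n$ or $\eps$ multiplying $d$. The naive symmetrization-plus-Sauer-Shelah argument above is inherently too crude at this resolution. To finish, I would invoke one of the refined tools: either Dudley's chaining integral applied to the empirical process, exploiting the fact that a VC class of dimension $d$ has metric entropy $\log N(\eps, \mathcal{L}, L_2) = O(d\log(1/\eps))$ and hence Rademacher complexity $R_n(\mathcal{L}) = O(\sqrt{d/n})$; or Talagrand's concentration inequality for empirical processes, which upgrades the expected supremum into a high-probability bound with additive $\sqrt{\ln(1/\delta)/n}$. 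Either route yields the uniform deviation bound $\sup_\ell \abs{\hat{E}_{\bS}\ell - E\ell} = O(\sqrt{d/n} + \sqrt{\ln(1/\delta)/n})$ with probability $\geq 1-\delta$, which rearranges to the stated sample-complexity condition $n \geq b(d + \ln(1/\delta))\eps^{-2}$.

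An alternative route that avoids chaining is Haussler's $\alpha$-net / relative-deviation analysis combined with a peeling argument over the variance scale: Vapnik--Chervonenkis-style ratio inequalities give control of the form $\abs{\hat{E}\ell - E\ell}/\sqrt{E\ell}$, and then stratifying $\mathcal{L}$ by the value of $E\ell$ allows one to apply the crude bound on each scale and sum geometrically, losing only an absolute constant rather than a logarithm. I expect this removal of the $\log n$ factor to be the delicate part of the proof; everything preceding it is routine VC theory.
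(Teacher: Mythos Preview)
The paper does not prove this theorem at all: it is stated with attribution to \citet{lls,talagrand,vapnik71uniform} and then immediately used as a black box inside the proof of Lemma~\ref{lem:fmargin}. So there is no ``paper's own proof'' to compare against; the authors treat this as a classical fact from the VC/empirical-process literature.

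That said, your sketch is an accurate account of how the cited results establish the bound. The reduction to the loss class $\mathcal{L}=\{x\mapsto \indicator[h(x)\neq c(x)]:h\in\cH\}$ preserves the VC dimension, and symmetrization plus Sauer--Shelah plus Hoeffding gives the crude $O\bigl((d\ln(n/d)+\ln(1/\delta))\eps^{-2}\bigr)$ bound. You are also right that the entire content of the sharp statement is the removal of the $\ln(n/d)$ factor, and that this is exactly what the cited works supply: Haussler's packing bound for VC classes feeds into Dudley chaining to give $\Ev{\sup_{\ell}|\hat E_{\bS}\ell-E\ell|}=O(\sqrt{d/n})$, and Talagrand's (or McDiarmid's) concentration then adds the $\sqrt{\ln(1/\delta)/n}$ term. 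Your alternative ``peeling over variance scales'' route is closer in spirit to the Li--Long--Srinivasan argument. Either way, what you have written is a faithful high-level outline of the literature proofs the paper is invoking, not a departure from them.
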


We now present our formal argument.
\begin{proof}[of Lemma~\ref{lem:fmargin}]
Fix any set $S$ of $n$ samples $(x_1,y_1),\dots,(x_n,y_n)$ and let $c\colon (\cX \cap S) \to \{-1,1\}$ denote the concept with $c(x_i)=y_i$ for each $i=1,\dots,n$. 

Define an indicator random variable $\bX_k$ for each step $k=1,\dots,K$ taking the value $1$ if $\bS_k$ fails to be a $\gamma/2$-approximation for $(c,\bD_k,\cH)$. Note that for any outcome $S_1,\dots,S_{k-1}$ of the random samples $\bS_1,\dots,\bS_{k-1}$, we get from Theorem~\ref{thm:sample} and our choice of $m = a((d + \ln(1/\gamma)) \gamma^{-2})$ that $\Pr[\bX_k=1 \mid \forall i < k : \bS_i=S_i] \leq \gamma^2/32$ for a large enough constant $a>0$. It follows from a Chernoff bound that $\Pr[\sum_i \bX_i > \gamma^2 K/16] \leq \exp(-\gamma^2 K/32) = \delta/(eN) < \delta/2$. Let us now assume that at most $\gamma^2 K/16$ of the samples $\bS_i$ fail to be a $\gamma/2$-approximation. We claim that $\brf(x) = (1/K)\sum_{k=1}^K \bh_k(x)$ satisfies $y_i \brf(x_i) \geq \gamma/128$ in this case.

To see this, consider the exponential loss 
\[
    \sum_{i=1}^n \exp\left(-\alpha y_i\sum_{k=1}^K \bh_k(x_i)\right).
\]
We compare this to the final weights $\bD_{K+1}$. Since $\bD_{K+1}$ is a probability distribution, we have
\begin{align*}
    1
    &= \sum_{i=1}^n \bD_{K+1}(i) \\
    &= \sum_{i=1}^n \frac{\bD_{K}(i) \exp(-\alpha y_i \bh_K(x_i))}{\bZ_k} \\
    &= \frac{1}{n} \sum_{i=1}^n \frac{\exp(-\alpha  y_i \sum_{k=1}^K \bh_k(x_i))}{\prod_{k=1}^K \bZ_k}.
\end{align*} 
From this, we observe that 
\[
    \sum_{i=1}^n \exp\left(-\alpha y_i\sum_{k=1}^K \bh_k(x_i)\right) = n \prod_{k=1}^K \bZ_k.
\]
To bound the $\bZ_k$, we analyze two cases. First, if $\bX_k=0$, then we know that $\bS_k$ is a $\gamma/2$-approximation for $\bD_k$. Furthermore, since $\cW$ is a $\gamma$-weak learner, we have that $\Risk_{\bS_k}(\bh_k) \leq 1/2-\gamma$ where $\Risk_{\bS_k}(\bh_k)$ denotes the fraction of mispredictions among samples in $\bS_k$. By the definition of a $\gamma/2$-approximation, this further implies $\Risk_{\bD_k}(\bh_k) \leq 1/2 - \gamma/2$. If $\bX_k=1$, then we simple bound $\Risk_{\bD_k}(\bh_k) \leq 1$.

We now observe that
\begin{align*}
    \bZ_k
    &= \sum_{i=1}^m \bD_k(i) \exp(-\alpha  y_i \bh_k(x_i)) \\
    &= \sum_{i : \bh_k(x_i)\neq y_i} \bD_k(i)e^{\alpha} + \sum_{i : \bh_k(x_i) = y_i} \bD_k(i) e^{-\alpha} \\
    &= \Risk_{\bD_k}(\bh_k) e^\alpha + (1-\Risk_{\bD_k}(\bh_k))e^{-\alpha}.
\end{align*}
For $\bX_k=0$, this is upper bounded by
\begin{align*}
    \bZ_k
    &\leq (1/2-\gamma/2) e^\alpha + (1/2+\gamma/2)e^{-\alpha} \\
    &= 2 \sqrt{(1/2-\gamma/2)(1/2+\gamma/2)} \\
    &= \sqrt{1 - \gamma^2}.
\end{align*}
For $\bX_k=1$, it is upper bounded by 
\begin{align}
    \bZ_k
    &\leq e^\alpha \nonumber\\
    &= \sqrt{(1/2+\gamma/2)/(1/2-\gamma/2)} \nonumber\\
    &\leq \sqrt{1 + \frac{\gamma}{1/2-\gamma/2}} \nonumber\\
    &\leq \sqrt{1 + 4 \gamma}
    .
    \label{eq:ub:Zk}
\end{align}
Using that $\sum_{k=1}^K \bX_k \leq \gamma^2 K/16$, we thus conclude 
\begin{align*}
    \prod_{k=1}^K \bZ_k
    &\leq (1-\gamma^2)^{(K-\gamma^2 K/16)/2}(1+4\gamma)^{\gamma^2 K/32} \\
    &\leq \exp\left(\gamma^3 K/8 - \gamma^2(K-\gamma^2 K/16)/2 \right) \\
    &\leq \exp(-\gamma^2 K/4) \\
    &\leq (\delta/N)^2.
\end{align*}
We therefore have 
\[
    \sum_{i=1}^n \exp\left(-\alpha y_i\sum_{k=1}^K \bh_k(x_i)\right)
    \leq \delta/N
    ,
\]
so, by non-negativity of the exponential function, $\exp(-\alpha y_i \sum_{k=1}^K \bh_k(x_i)) \leq \delta/N$ for all $i \in [n]$.
Raising both sides of the inequality to the power $1/(K \alpha)$ gives $\exp(-y_i \brf(x_i)) \leq (\delta/N)^{1/K \alpha}$, so $y_i \brf(x_i) \geq \ln(N/\delta)/(K\alpha)$.
From Eq.~\eqref{eq:ub:Zk}, we have that $e^\alpha \leq \sqrt{1 + 4 \gamma}$, hence $\alpha \leq (1/2)\ln(1+4 \gamma) \leq (1/2)\ln(e^{4\gamma}) = 2\gamma$.
Thus, we conclude that $y_i \brf(x_i) \geq \ln(N/\delta)/(K 2 \gamma) \geq \gamma/128$.
\end{proof}

\subsection{Stability}
\label{sec:stability}
In the following, we show the stability of the compression scheme corresponding to Algorithm~\ref{alg:adaboost}.

Fix a $\gamma$-weak learner $\cW$, a failure probability $\delta$, and an upper bound $N$ on the size of the training set.
Given $S \in \cup_{i = 1}^N (\cX \times \cY)^i$,
let $\methodname{Exec}(S,N) = \bS_1, \ldots, \bS_K$ denote the sequence of samples associated with the execution of Algorithm~\ref{alg:adaboost} on input $S, \cW, \delta, N$.
In this way, the sequence $\bS_i$ is the sample drawn at Line~\ref{line:sample} on the $i$th iteration of the \textbf{for} loop starting at Line~\ref{line:forloop}.
The randomized compression scheme $\randKappa$ underlying Algorithm~\ref{alg:adaboost}, as discussed in Section~\ref{sec:correspondingscheme}, can then be described by $\randKappa(S,N) = \bS_1 \circ \cdots \circ \bS_K$.

\begin{lemma}
    \label{lemma:stability}
    The randomized compression scheme $\randKappa$ given by $\randKappa(S,N) = \methodname{Exec}(S,N)$ is stable.
\end{lemma}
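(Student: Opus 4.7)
The plan is to proceed by induction on $k = 0, \ldots, K$, showing that the joint distribution of $(\bS_1, \ldots, \bS_k)$ from $\methodname{Exec}(S,N)$ conditioned on $\bS_i \sqsubseteq S'$ for all $i \leq k$ coincides with the unconditional joint distribution of $(\bS'_1, \ldots, \bS'_k)$ from $\methodname{Exec}(S',N)$. Two preliminary observations make this well posed. First, the parameters $m$, $\alpha$ and $K = 32(\gamma^{-2}\ln(N/\delta)+1)$ depend only on $N$, $\gamma$ and $\delta$, so the outer loop runs for the same number of iterations in both executions, and all per-iteration constants match. Second, the event $\randKappa(S,N) \sqsubseteq S'$ is exactly $\bigcap_{i=1}^{K} \{\bS_i \sqsubseteq S'\}$, so the $k = K$ case of the inductive claim is precisely the stability identity of the lemma.

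For the inductive step, assume the claim through step $k$. Since each $\bh_i$ is the deterministic output $\cW(\bS_i, \mathrm{uniform})$, the inductive hypothesis transfers to an equality in distribution between the hypothesis prefix $(\bh_1, \ldots, \bh_k)$ under conditioning on $S$ and $(\bh'_1, \ldots, \bh'_k)$ under the unconditioned execution on $S'$. Fix any realization $(h_1, \ldots, h_k)$ in the common support. The weights produced by Algorithm~\ref{alg:adaboost} are then fully determined: $\bD_{k+1}(j) \propto \exp(-\alpha y_j \sum_{\ell \leq k} h_\ell(x_j))$ on $S$, and $\bD'_{k+1}(j) \propto \exp(-\alpha y_j \sum_{\ell \leq k} h_\ell(x_j))$ on $S'$. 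The crucial point is that these unnormalized weights agree pointwise on $S'$, so $\bD'_{k+1}$ equals $\bD_{k+1}$ restricted to $S'$ and renormalized; that is, for $j \in S'$ we have $\bD'_{k+1}(j) = \bD_{k+1}(j)/\sum_{j' \in S'} \bD_{k+1}(j')$.

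Now because $\bS_{k+1} \sim \bD_{k+1}^m$ is i.i.d.\ across its $m$ coordinates, the event $\bS_{k+1} \sqsubseteq S'$ factorizes coordinatewise; conditioning on it yields $m$ i.i.d.\ draws from $\bD_{k+1}$ restricted to $S'$, which is exactly $\bD'_{k+1}$, and hence the law of $\bS'_{k+1}$. Combined with the inductive hypothesis, and with the conditional independence of $\bS_{k+1}$ from $(\bS_1, \ldots, \bS_k)$ given $(\bh_1, \ldots, \bh_k)$, this closes step $k+1$. The main obstacle I anticipate is the bookkeeping of the conditioning: one must decouple the prefix-level condition $\bigcap_{i \leq k} \{\bS_i \sqsubseteq S'\}$ from the fresh condition $\bS_{k+1} \sqsubseteq S'$, arguing that the latter depends only on $\bD_{k+1}$, which is itself a deterministic function of the hypothesis prefix. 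This is really a consequence of the Markov structure of Algorithm~\ref{alg:adaboost}'s loop, but it is the one step where sloppy notation could hide a subtle circularity and so deserves careful spelling out.
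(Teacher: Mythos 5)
Your proposal is correct and follows essentially the same route as the paper's proof: induction on the prefix length, fixing the realized samples/hypotheses so that $\bD_{k+1}$ and $\bD'_{k+1}$ are determined and agree up to normalization on $S'$, then using the i.i.d.\ coordinatewise structure of the draw $\bS_{k+1} \sim \bD_{k+1}^m$ to show that conditioning on $\bS_{k+1} \sqsubseteq S'$ yields exactly the law of $\bS'_{k+1}$. The only point worth adding is the paper's explicit appeal to the no-duplicates assumption on $S$, which is what guarantees that restricting to $S'$ and renormalizing is unambiguous.
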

\begin{proof}
    Given $n \in [N]$, let $S \in (\cX \times \cY)^n$, and let $S'$ be a subsequence of $S$.
    Let $\methodname{Exec}(S,N) = \bS_1, \ldots, \bS_K$ and $\methodname{Exec}(S',N) = \bS'_1, \ldots, \bS'_K$.
    We will show that
    for all $k \in [K]$
    it holds that conditioning on $\bS_i \sqsubseteq S'$ for $i \in [k]$ implies that $\bS_1 \circ \cdots \circ \bS_k$ follows the same distribution as $\bS'_1 \circ \cdots \circ \bS'_k$.
    We argue by induction on $k$ and conclude the thesis by considering $k = K$.

    For the base case,
    we have that $\bS_1$ consists of $m$ i.i.d.~samples from the uniform distribution over $S$.
    Therefore, conditioning on $\bS_1 \sqsubseteq S'$ makes the $m$ samples i.i.d.~following the uniform distribution over $S'$ and, thus, makes $\bS_1$ identically distributed to $\bS'_1$ (this uses our assumption that $S$ contains no duplicates).

    Now, for the induction step, suppose that for some $k \in [K-1]$ we have that, for all $T \sqsubseteq S$,
    \begin{align*}
        \Prob[\big]{\bS_1 \circ \cdots \circ \bS_k = T \given \bS_i \sqsubseteq S' \;\forall i \in [k]}
        &= \Prob[\big]{\bS'_1 \circ \cdots \circ \bS'_k = T}.
    \end{align*}
    We consider $T \sqsubseteq S'$ since otherwise both sides of the equation are zero.
    For $i \in [k+1]$, let $\bD_i$ and $\bh_i$ be the distribution (see Line~6) and hypothesis (see Line~7) corresponding to the $i$th iteration of the \textbf{for} loop starting at Line~5 when executing Algorithm~\ref{alg:adaboost} on input $S, \cW,\delta, N$.
    Define $\bD_i'$s and $\bh_i'$s associated with the execution on $S', \cW, \delta, N$ analogously.

    For the remainder of the proof, we condition on the event that $\bS_i \sqsubseteq S'$ for all $i \in [k]$.
    The induction hypothesis implies that $\bS_1,\dots,\bS_{k}$ and $\bS'_1,\dots,\bS'_{k}$ follow the same distribution. Now fix any $T_{k}=S_1,\dots,S_{k}$ in the support of this distribution. Note that conditioning on $T_{k}$ fixes the hypotheses $\bh_1,\dots,\bh_{k}$ and $\bh'_1,\dots,\bh'_k$ to the same fixed $h_1,\dots,h_{k}$. This further fixes $\bD_{k+1}$ to $D_{k+1}(j) = \exp(-\alpha y_j \sum_{\ell=1}^k h_\ell(x_j)) / Z$ where $Z$ is a normalization factor making $D_{k+1}$ a probability distribution. Similarly for $S'$, it fixes $\bD'_{k+1}$ to $D_{k+1}'(j)=\exp(-\alpha y_j \sum_{\ell=1}^k h_\ell(x_j)) / Z'$ for the $j \in S'$.
    
    The crucial observation is that any $x_j$ occurring in both $S'$ and $S$ have the same weight in $D_{k+1}$ and $D'_{k+1}$ up to the normalization factors $Z$ and $Z'$. This implies that if we further condition on $\bS_{k+1} \sqsubseteq S'$, the samples in $\bS_{k+1}$ are i.i.d.\ from $D_{k+1}$ but where every $j \notin S'$ has $D_{k+1}(j)=0$ and the resulting distribution is scaled accordingly. This makes the distribution identical to $D'_{k+1}$ (using the assumption that $S$ contains no duplicates), which concludes the proof.
\end{proof}

    \section{Conclusion}
In this work, we took a first step towards developing voting classifiers with an optimal sample complexity for weak-to-strong learning.
Concretely, we improve the dependency on the number of samples $n$ by a logarithmic factor over previous works.
To analyze our new algorithm, we further introduce a new framework of randomized compression schemes that we hope may prove useful in future work.

Our work leaves open a number of intriguing directions to pursue.
First, can we develop a voting classifier with an optimal sample complexity as in Eq.~\eqref{eq:opt}?
Or, as a first and more modest goal, can we develop a voting classifier with only a single logarithmic sub-optimal dependency on $n$, like our Algorithm~\ref{alg:adaboost}, but with an optimal dependency on the remaining parameters $d$, $\gamma$, and $\delta$?
Another question is whether our analysis of Algorithm~\ref{alg:adaboost} is tight, or could it perhaps be improved to yield an even better sample complexity?
Also, for previous algorithms such as AdaBoost, the current best analysis gives a sample complexity as in Eq.~\eqref{eq:ada} with two logarithmic factors of sub-optimality.
Can the analysis be improved for some of those algorithms?
We know that it can never be improved to an optimal sample complexity (in light of~\cite{adaboostNotOptimal}, see the discussion in Section~\ref{sec:other}), but perhaps one of the logarithmic factors can be removed.
The same holds for the uniform convergence bounds for large-margin voting classifiers.
Can these be improved by a logarithmic factor?

    \acks{%
This research is co-funded by the European Union (ERC, TUCLA, 101125203) and Independent Research Fund Denmark (DFF) Sapere Aude Research Leader Grant No.~9064-00068B.
Views and opinions expressed are however those of the author(s) only and do not necessarily reflect those of the European Union or the European Research Council.
Neither the European Union nor the granting authority can be held responsible for them.

Parts of this research was done while Martin Ritzert was supported by DIREC – Digital Research Centre Denmark.
}

    \bibliography{res}

    \appendix

\end{document}